%%%% ijcai22.tex

\typeout{IJCAI--22 Instructions for Authors}

% These are the instructions for authors for IJCAI-22.

\documentclass{article}
\pdfpagewidth=8.5in
\pdfpageheight=11in
% The file ijcai22.sty is NOT the same as previous years'
\usepackage{ijcai22}

% Use the postscript times font!
\usepackage{times}
\usepackage{soul}
\usepackage{url}
\usepackage[hidelinks]{hyperref}
\usepackage[utf8]{inputenc}
\usepackage[small]{caption}
\usepackage{graphicx}
\usepackage{amsmath}
\usepackage{amsthm}
\usepackage{booktabs}
\usepackage{algorithm}
\urlstyle{same}

\usepackage{paralist}

\usepackage{algorithmicx}
\usepackage{algpseudocode}
\usepackage{multirow}

\DeclareMathOperator{\given}{|}

% the following package is optional:
%\usepackage{latexsym}

% See https://www.overleaf.com/learn/latex/theorems_and_proofs
% for a nice explanation of how to define new theorems, but keep
% in mind that the amsthm package is already included in this
% template and that you must *not* alter the styling.
\newtheorem{example}{Example}
\newtheorem{theorem}{Theorem}
\newtheorem{definition}{Definition}
\newtheorem{corollary}{Corollary}
% Following comment is from ijcai97-submit.tex:
% The preparation of these files was supported by Schlumberger Palo Alto
% Research, AT\&T Bell Laboratories, and Morgan Kaufmann Publishers.
% Shirley Jowell, of Morgan Kaufmann Publishers, and Peter F.
% Patel-Schneider, of AT\&T Bell Laboratories collaborated on their
% preparation.

% These instructions can be modified and used in other conferences as long
% as credit to the authors and supporting agencies is retained, this notice
% is not changed, and further modification or reuse is not restricted.
% Neither Shirley Jowell nor Peter F. Patel-Schneider can be listed as
% contacts for providing assistance without their prior permission.

% To use for other conferences, change references to files and the
% conference appropriate and use other authors, contacts, publishers, and
% organizations.
% Also change the deadline and address for returning papers and the length and
% page charge instructions.
% Put where the files are available in the appropriate places.

% PDF Info Is REQUIRED.
% Please **do not** include Title and Author information
\pdfinfo{
/TemplateVersion (IJCAI.2022.0)
}

\title{Exchangeability-Aware Sum-Product Networks}

\author{
Stefan Lüdtke$^1$\footnote{Contact Author}\and
Christian Bartelt$^1$\And
Heiner Stuckenschmidt$^{2}$
\affiliations
$^1$Institute for Enterprise Systems, University of Mannheim, Germany\\
$^2$Data and Web Science Group, University of Mannheim, Germany\\
\emails
\{luedtke,bartelt\}@es.uni-mannheim.de,
heiner@informatik.uni-mannheim.de
}

\begin{document}

\maketitle

\begin{abstract}

Sum-Product Networks (SPNs) are expressive probabilistic models that provide exact, tractable inference.
They achieve this efficiency by making use of local independence.

On the other hand, mixtures of exchangeable variable models (MEVMs) are a class of tractable probabilistic models that make use of \emph{exchangeability} of discrete random variables to render inference tractable. Exchangeability, which arises naturally in relational domains, has not been considered for efficient representation and inference in SPNs yet. 

The contribution of this paper is  a novel probabilistic model which we call \emph{Exchangeability-Aware Sum-Product Networks} (XSPNs). It contains both SPNs and MEVMs as special cases, and combines the ability of SPNs to efficiently learn deep probabilistic models with the ability of MEVMs to efficiently handle exchangeable random variables.
We introduce a structure learning algorithm for XSPNs and empirically show that they can be more accurate than conventional SPNs when the data contains repeated, interchangeable parts. 
\end{abstract}

\section{Introduction}

The accurate representation of probability distributions and efficient inference in such distributions is fundamental in machine learning.
Recently, probabilistic models based on deep neural networks, like Variational Autoencoders \cite{kingma2014auto}, neural autoregressive models \cite{larochelle2011neural} and normalizing flows \cite{rezende2015variational} 
 have received a lot of attention. They have been very successful for tasks like data generation, anomaly detection, and prediction.
 However, these models lack efficient and exact inference capabilities. 
 
Probabilistic circuits (PCs) like Probabilistic Sentential Decision Diagrams \cite{liang2017learning} or Cutset Networks \cite{rahman2014cutset} are deep probabilistic models that permit efficient, exact marginal inference. 
Sum-Product Networks (SPNs) \cite{poon2011sum}, one of the most prominent PCs, represent probability distributions as a  computation graph that consists of sum nodes (representing mixtures) and product nodes (representing independent factors). Time complexity of marginal inference in SPNs is linear in the network size.
%Sum-Product Networks (SPNs) \cite{poon2011sum} are deep probabilistic models that permit exact and efficient marginal inference. 
%SPNs model probability distributions as a computation graph that consists of sum nodes (representing mixtures) and product nodes (representing independent factors). Specifically, time complexity of marginal inference in SPNs is linear in the network size. 

In this paper, we are specifically interested in efficient inference in (and learning of) discrete distributions involving repeated, interchangeable components. Such distributions arise naturally in \emph{relational} domains, consisting of multiple, interrelated entities, as illustrated by the following example: 
\begin{example}
\label{expl:intro-new}
 Suppose $n$ people are invited to a workshop, and we want to estimate how many of them will attend. We assume that attendance of each person depends on whether the topic of the workshop is considered ``hot''. Additionally, attendance of each person depends on how many of the other people will attend. 
\end{example}
In this domain, attendance of each person is not independent, but depends on the attendance of all other people. 
Instead, however, the probability that exactly $k$ people attend does not depend on the specific \emph{identities} of those $k$ people. 
More formally, the random variables representing attendance are \emph{exchangeable}---their joint distribution is invariant under permutation of their assignment. 
Unfortunately, tree-structured SPNs that are generated by SPN structure learning algorithms (as well as graphical models based on the notion of conditional independence) do not efficiently encode distributions over exchangeable RVs. 

Still, similar to conditional independence, finite exchangeability can substantially reduce the complexity of the model and allow for tractable inference \cite{niepert2014tractability}.
This property is exploited by Mixtures of Exchangeable Variable Models (MEVMs) \cite{niepert2014exchangeable}. MEVMs are tractable probabilistic models that use efficient representations of exchangeable distributions as building blocks. 
MEVMs can be seen as shallow SPNs, consisting of a single sum and product layer, and efficient representations of finite exchangeable sequences at the leaves. 
However, deep SPN architectures can represent some functions more efficiently than shallow architectures \cite{delalleau2011shallow}.
Despite their obvious connection, no attempts have been made yet to combine SPNs and MEVMs into a unified formalism.

The main contribution of this paper is to provide such a unified model, that contains both MEVMs and (conventional) SPNs as special cases. The proposed model combines the ability of SPNs to efficiently represent and learn deep probabilistic models with the ability of MEVMs to efficiently handle exchangeable random variables.
We show that marginal inference in the proposed model---which we call \emph{Exchangeability-Aware} SPNs (XSPNs)---is tractable. The general concept is shown in Figure \ref{fig:concept}.
Furthermore, we introduce a structure learning algorithm for XSPNs, which recursively performs statistical tests of finite exchangeability, and fits an exchangeable leaf distribution when appropriate. 
Finally, we empirically demonstrate that our structure learning algorithm can achieve significantly higher log-likelihoods than conventional SPN structure learning.
This work is a first step towards learning probabilistic circuits for relational domains.

\begin{figure}[tb]
\centering
\includegraphics[scale=0.4]{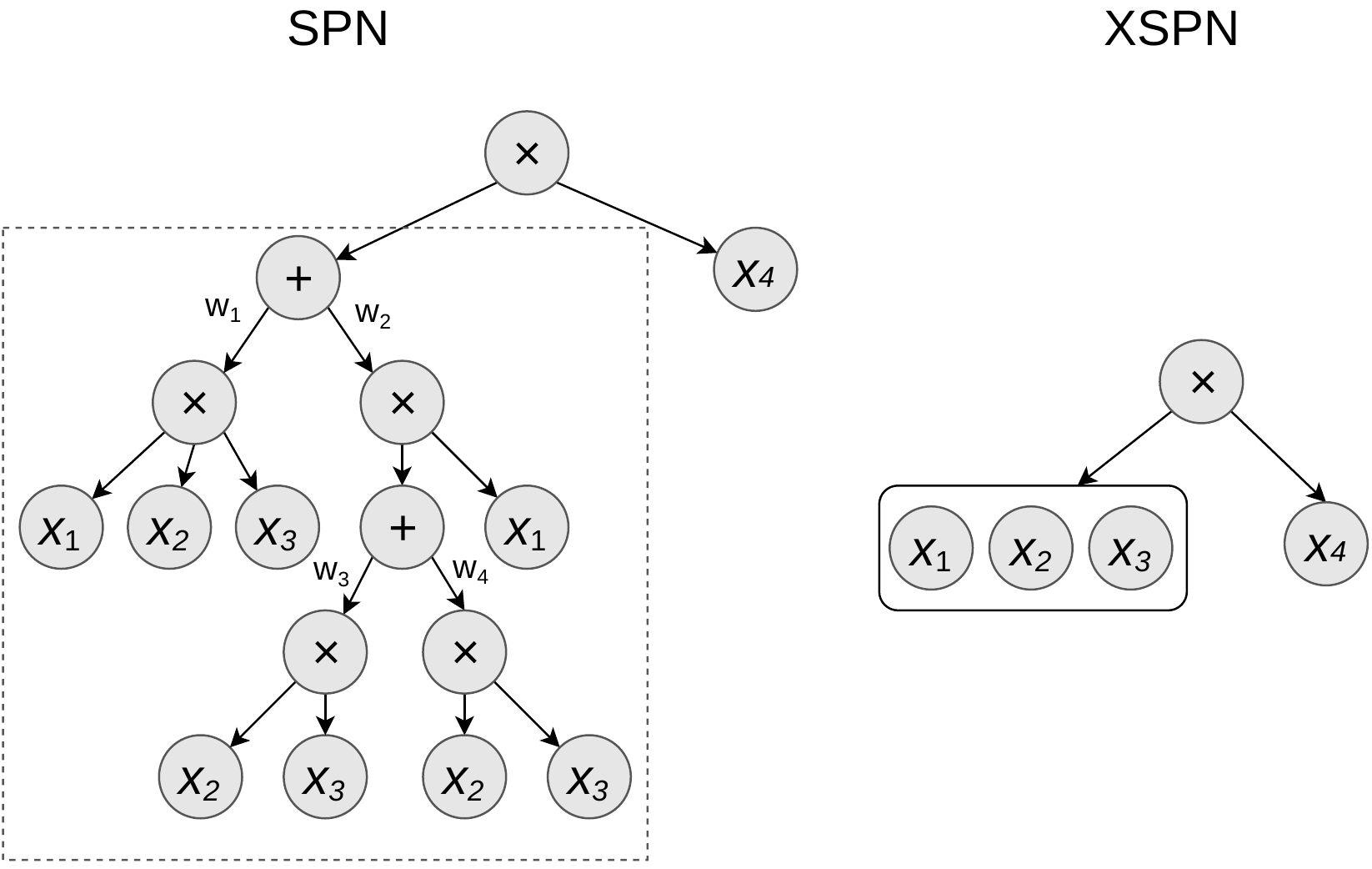}
\caption{Left: SPN generated by LearnSPN from 1000 samples from a distribution of the form $p(X_1,X_2,X_3,X_4) = p(X_1,X_2,X_3) \, p(X_4)$, where $p(X_1,X_2,X_3)$ is exchangeable. This factor contains no (local) independence, thus LearnSPN learns a complicated structure that in essence represents the factor by full enumeration (framed subtree). Right: XSPN representing the same distribution. The XSPN is more compact, because it can directly represent exchangeable distributions efficiently.}
%Note that this XSPN can also be interpreted as an MEVM.}
\label{fig:concept}
\end{figure}

\section{Sum-Product Networks}
\label{sec:spns}
%We start by briefly reviewing conventional SPNs.

\paragraph*{Representation}
An SPN \cite{poon2011sum} is a rooted directed acyclic graph that represents a probability distribution over a sequence of RVs $\mathbf{X} = X_1,\dots,X_n$. 
Each node $N$ in the graph represents a probability distribution $P_N$ over a subset $\mathbf{X}_{\phi(N)} \subseteq \mathbf{X}$ of the RVs, where $\phi(N) \subseteq \{1,\dots,n\}$ is called the \emph{scope} of the node $N$.
An SPN consists of three types of nodes: distribution nodes, sum nodes and product nodes. All leaf nodes of the graph are distribution nodes, and all internal nodes are either sum or product nodes.
In the following, we denote the set of children of the node $N$ as $\text{ch}(N)$.

A distribution node $N$ encodes a tractable probability distribution  $P_N(\mathbf{X}_{\phi(N)})$ over the RVs in its scope. 
Early works on SPNs used univariate distributions, e.g.\ Bernoulli distributions or univariate Normals, but multivariate distributions are possible as well \cite{vergari2015simplifying}.
A product node $N$ represents the distribution $P_N(\mathbf{X}_{\phi(N)}) = \prod_{C \in \text{ch}(N)} P_C(\mathbf{X}_{\phi(C)})$. We require product nodes to be \emph{decomposable}, meaning that the scopes of all children of a product node are pairwise disjoint. 
A sum node $N$ represents the distribution $P_N(\mathbf{X}_{\phi(N)}) = \sum_{C \in \text{ch}(C)} w_C\, P_C(\mathbf{X}_{\phi(C)})$. We require sum nodes to be \emph{complete}, which means that the scopes of all children of the sum node are identical. 

Intuitively, product nodes represent distributions that decompose into independent factors (where decomposability ensures this independence), and sum nodes represent mixture distributions (where completeness ensures that sum nodes represent proper mixtures).
More specifically, product nodes represent \emph{local independence}, i.e., independence that holds conditional on latent variables.
By definition, the distribution represented by an SPN is the distribution defined by its root node.
Figure \ref{fig:concept} (left) shows an example of an SPN. 
Many extensions and generalizations of this general model have been proposed, e.g., models for continuous and hybrid domains \cite{molina2018mixed} or time series models \cite{yu2021whittle}. %, or models for conditional distributions \cite{shao2020conditional}.

\paragraph*{Inference}
The appealing property of SPNs is that they permit efficient marginal inference, i.e., computing $P(\mathbf{X}' {=} \mathbf{x}')$ for a subset $\mathbf{X}' \subset \mathbf{X}$ of the RVs. This is possible because summation over the marginalized RVs can be ``pushed down'' into the leaf nodes of the SPN \cite{peharz2015theoretical}. Thus, marginal inference reduces to marginalization of the leaves and evaluating the internal nodes of the SPN once. Thus, when marginal inference of the leaf distributions is possible in constant time, marginal inference is linear in the number of nodes of the SPN. This task becomes specifically simple when the leaf distributions are univariate, because the value of marginalized leaves can simply be set to 1.

\paragraph*{Learning}

Efficient inference in SPNs is a result of decomposability and completeness of the SPN. 
Thus, a central challenge is to learn SPN structures that satisfy these constraints.

We focus on LearnSPN  \cite{gens2013learning}, a greedy, top-down structure learning algorithms that creates a decomposable and complete tree-structured SPN. 
The algorithm constructs the tree recursively in a top-down way. At each step, the algorithm performs an independence test on the dataset, and creates a product node if the data can be split into sets of independent RVs. If no independence is identified, a clustering algorithm is used to partition the data into subsets and a corresponding sum node is created, with weights corresponding to the proportion of data in each cluster. 
In both cases, the algorithm is recursively called for the corresponding data subsets. When only a single RV is left in a call, a leaf node is created by estimating a univariate, smoothed distribution of that RV. 
To prevent overfitting, a leaf is also created when the number of instances falls below a pre-defined threshold $m$. In this case, either a full factorization is assumed, or a leaf representing a tractable multivariate distribution is created, e.g. a Chow-Liu tree \cite{vergari2015simplifying}.

Several other SPN learning algorithms have been proposed, e.g.\ a Bayesian structure learning algrithm \cite{trapp2019bayesian}, and an algorithm that generates a random (decomposable and complete) SPN structure and then optimize the parameters of that SPN by EM \cite{peharz2020random}.

\section{Exchangeability-Aware Sum-Product Networks}

We propose Exchangeability-Aware Sum-Product Networks (XSPNs) as novel tractable probabilistic models. They are based on explicitly modeling distributions over partially exchangeable RVs as leaf distributions of an SPN.

\subsection{Finite Exchangeability}
\label{subsec:exchangeability}

Exchangeability of \emph{infinite} sequences of RVs is fundamental in Bayesian statistics. 
Here, we instead focus on exchangeability of \emph{finite} sequences of RVs, as stated by the following definition.

\begin{definition}[Finite Exchangeability]
\label{def:exchangeability}
Let $X_1, \dots, X_n$ be a sequence of RVs with joint distribution $P$ and let $S(n)$ be the group of permutations acting on $\{1,\dots,n\}$. We call $X_1,\dots,X_n$ \emph{exchangeable} iff $P(X_1{=}x_1,\dots,X_n{=}x_n) = P(X_1{=}x_{\pi(1)},\dots,X_n{=}x_{\pi(n)})$ for all $\pi \in S(n)$. 
\end{definition}

Exchangeable RVs are not necessarily independent, and thus a graphical model representation of a distribution over exchangeable RVs can have high tree-width. 
Similarly, conventional SPNs do not encode distributions over exchangeable RVs efficiently (see Figure \ref{fig:concept}).
However, as shown below, exchangeability still allows for an efficient representation and inference.

Full exchangeability is a strong assumption that is not necessary for our purposes. Instead, in this paper, we focus on the weaker property of \emph{partial} exchangeability (first introduced by \cite{de1938condition}), which is based on the notion of sufficient statistics. 

\begin{definition}[Partial Exchangeability]
Let $X_1, \dots, X_n$ be a sequence of RVs with joint distribution $P$, let $\text{dom}(X)$ be the domain of $X$ and let $\mathcal{T}$ be a finite set. The sequence $X_1,\dots,X_n$ is \emph{partially exchangeable} w.r.t.\ the statistic $T: \text{dom}(X_1) \times \dots \times \text{dom}(X_n) \rightarrow \mathcal{T}$ if
\begin{equation*}
T(\mathbf{x}) = T(\mathbf{x}') \text{ implies } P(\mathbf{x}) = P(\mathbf{x}')
\end{equation*}
for all assignments $\mathbf{x}$ and $\mathbf{x}'$. 
\end{definition}

The statistic $T$ partitions the assignments $\mathbf{x}$ into equivalence classes $S_t = \{\mathbf{x} \,|\, T(\mathbf{x}) = t\}$ with identical value $t$ of the statistic $T$ and identical probability.
Partial exchangeability allows to represent a distribution by $|\mathcal{T}|$ parameters, where parameter $w_t$ is the probability of each assignment $\mathbf{x}$ with $T(\mathbf{x})=t$. Using this parametrization, the probability of an assignment $\mathbf{x}$ is given by $P(\mathbf{x}) = \sum_{t \in \mathcal{T}} [T(\mathbf{x}) = t]\, w_t$, where $[\cdot ]$ is the indicator function \cite{diaconis1980generalizations}. 

In the same vein, partial exchangeability allows for efficient marginal and MAP inference. %(more specifically, time complexity is polynomial in $|\mathcal{T}|$).
%Before we can state this property more formally, we need to define additional concepts. 
Let $\mathbf{e}$ be a partial assignment, and let $\mathbf{x} \sim \mathbf{e}$ denote that $\mathbf{x}$ and $\mathbf{e}$ agree on the values of RVs in their intersection. Furthermore, let
$S_{\mathbf{e},t} = \{\mathbf{x} \,|\, T(\mathbf{x}) = t \text{ and } \mathbf{x}\sim \mathbf{e}\}$ be the set of all assignments that are consistent with evidence $\mathbf{e}$  and that correspond to value $t$ of the statistics.

\begin{theorem}[\cite{niepert2014tractability}]
\label{thm:efficient-inference}
Let $X_1,\dots,X_n$ be partially exchangeable w.r.t. $T$. If we can efficiently,
\begin{compactitem}
\item for all $\mathbf{x}$, evaluate $P(\mathbf{x})$, and
\item for all $\mathbf{e}$ and $t$, decide whether there exists an $\mathbf{x} \in S_{\mathbf{e},t}$ and if so, construct it,
\end{compactitem}
then the complexity of MAP inference is polynomial in $|\mathcal{T}|$. 
If we can additionally compute $|S_{\mathbf{e},t}|$ efficiently, then the complexity of marginal inference is also polynomial in $|\mathcal{T}|$.
\end{theorem}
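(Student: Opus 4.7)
The plan is to exploit the equivalence-class structure induced by the statistic $T$: every assignment in $S_{\mathbf{e},t}$ has the same probability, so both the MAP and marginal computations restricted to assignments consistent with $\mathbf{e}$ decompose into an outer iteration over the (at most $|\mathcal{T}|$) values of $t$ together with a per-class summary that the hypotheses of the theorem let us compute in constant time.

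For the MAP claim, I would proceed as follows. Since $P(\mathbf{x})$ depends on $\mathbf{x}$ only through $T(\mathbf{x})$, I can denote this common value by $w_t$ for $\mathbf{x} \in S_t$. To find $\arg\max_{\mathbf{x} \sim \mathbf{e}} P(\mathbf{x})$, I iterate over each $t \in \mathcal{T}$; using the second hypothesis, I decide whether $S_{\mathbf{e},t}$ is nonempty and, when it is, construct a witness $\mathbf{x}_t$; using the first hypothesis, I then evaluate $P(\mathbf{x}_t) = w_t$. The MAP assignment is any witness $\mathbf{x}_{t^*}$ with $t^* = \arg\max_{t : S_{\mathbf{e},t} \neq \emptyset} w_t$. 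The total cost is $O(|\mathcal{T}|)$ calls to the two oracles, giving the claimed polynomial bound.

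For the marginal claim I would write
\begin{equation*}
P(\mathbf{e}) \;=\; \sum_{\mathbf{x} \sim \mathbf{e}} P(\mathbf{x}) \;=\; \sum_{t \in \mathcal{T}} \sum_{\mathbf{x} \in S_{\mathbf{e},t}} P(\mathbf{x}) \;=\; \sum_{t \in \mathcal{T}} |S_{\mathbf{e},t}| \cdot w_t,
\end{equation*}
where the second equality uses that the sets $S_{\mathbf{e},t}$ partition the assignments consistent with $\mathbf{e}$, and the third uses partial exchangeability. The additional ability to compute $|S_{\mathbf{e},t}|$ efficiently then lets me evaluate each summand in constant time given the witness $\mathbf{x}_t$ constructed as above, so the whole sum is again computable with $O(|\mathcal{T}|)$ oracle calls.

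The only subtle point, and the one I would be most careful about, is the correctness of the factorization: I need to argue that partial exchangeability indeed implies all $\mathbf{x} \in S_t$ share a common probability $w_t$ (immediate from the definition, since $T(\mathbf{x}) = T(\mathbf{x}') = t$ forces $P(\mathbf{x}) = P(\mathbf{x}')$), and that $S_{\mathbf{e},t} \subseteq S_t$ so this common value is still $w_t$ even after conditioning on the evidence pattern. Everything else is a routine iteration over $\mathcal{T}$, so the ``hard part'' is really just making sure the three hypotheses are each invoked exactly where they are needed---construction for the MAP witness, evaluation for the weight $w_t$, and counting for the marginal factor $|S_{\mathbf{e},t}|$.
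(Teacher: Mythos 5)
Your proof is correct. Note that the paper itself states this theorem only by citation to \cite{niepert2014tractability} and gives no proof of it; your reconstruction matches the standard argument in that cited work: iterate over the at most $|\mathcal{T}|$ equivalence classes, use the construction oracle to obtain a witness and the evaluation oracle to obtain the common class probability $w_t$, then take the maximum (MAP) or the weighted sum $\sum_t |S_{\mathbf{e},t}|\, w_t$ (marginal). The one point worth stating slightly more carefully is that each oracle call is assumed \emph{efficient} (polynomial in the input), not constant time, so the overall bound is $O(|\mathcal{T}|)$ polynomial-time calls rather than $O(|\mathcal{T}|)$ total work; this does not affect the claimed polynomial complexity.
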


When the conditions of the theorem are satisfied for a statistic $T$, we call $T$ a tractable statistic. 
As a simple example, consider binary RVs $X_1,\dots,X_n$ and the statistic $T^\#(\mathbf{x}) = T^\#(x_1,\dots,x_n) = \sum_{i=1}^n x_i$, which counts the number of ones in an assignment.
For this statistic, the conditions of Theorem \ref{thm:efficient-inference} are satisfied: For a given $\mathbf{e}$ and $t$, 
constructing an arbitrary $\mathbf{x} \in S_{\mathbf{e},t}$ is straightforward and $|S_{\mathbf{e},t}| = \binom{n-n_{\mathbf{e}}}{t-T(\mathbf{e})}$, where $n_\mathbf{e}$ is the number of values in $\mathbf{e}$.

\paragraph*{Exchangeable Variable Models}
 (EVMs) \cite{niepert2014exchangeable} are tractable probabilistic models based on the notion of partial exchangeability. 
As basic building blocks, they use distributions over partially exchangeable RVs w.r.t. a statistic $T$, which are represented by parameters $w_1,\dots,w_{|\mathcal{T}|}$. 

An Exchangeable Variable Model (EVM) is a product of such exchangeable blocks. Specifically, let $\mathcal{X}$ be a partitioning of the RVs $X_1,\dots,X_n$ into disjoint subsets. An EVM defines a joint distribution via $P(\mathbf{X}) = \prod_{\mathbf{X}_i \in \mathcal{X}} P(\mathbf{X}_i)$, where the factors $P(\mathbf{X}_i)$ are exchangeable blocks, as defined above.
Finally, an MEVM is a mixture of such EVMs. 
Parameter estimation in MEVMs (i.e.\ estimation of the mixture weights, the independence structure inside each EVM and the parameters of the exchangeable blocks) can be done via Expectation Maximization (EM).

Statistical relational models, like Markov Logic Networks \cite{richardson2006markov} or relational Sum-Product Networks \cite{nath2015learning} are another class of probabilistic models that make use of exchangeability for efficient representation and inference. In these models, a high-level, relational structure which implicitly defines which RVs are exchangeable is defined a priori based on domain knowledge. In contrast, MEVMs (as well as our own contribution introduced below) identify exchangeability of RVs purely from the training data. 
Tractable inference in statistical relational models can only be guaranteed for certain sub-classes of models, and can also become intractable when exchangeability breaks due to asymmetrical evidence \cite{jaeger2012liftability}. 

\subsection{Exchangeable Leaf Distributions}

In this section, we introduce a variant of SPNs which we call  \emph{Exchangeability-Aware Sum-Product Network} (XSPN).
An XSPN is an SPN with multivariate leaf distributions $p(X_1,\dots,X_n)$, where $X_1,\dots,X_n$ are partially exchangeable w.r.t. a given statistic $T$. 
Note that the leaves can have different cardinality and can be exchangeable w.r.t. different statistics $T$. Specifically, when all leaves are univariate, XSPNs are equivalent to conventional SPNs.
Due to the fact that marginal inference in distributions of partially exchangeable RVs is tractable, inference in XSPNs is also tractable.

\begin{corollary}
Let $S$ be  the set of all XSPNs with at most $N$ nodes, where the leaf nodes represent distributions over RVs which are partially exchangeable w.r.t. a statistic $T$, which has $|\mathcal{T}|$ possible values. When the conditions from Theorem \ref{thm:efficient-inference} are satisfied for $T$, then time complexity of marginal inference in the XSPNs contained in $S$ is polynomial in $N \cdot |\mathcal{T}|$.
%(cannot be more precise than polynomial, because the N&VDB theorem does not provide a more precise bound
\end{corollary}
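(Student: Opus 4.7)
The plan is to reuse the standard bottom-up marginalization scheme for SPNs, swapping the constant-time univariate leaf marginalization for the polynomial-time scheme guaranteed by Theorem \ref{thm:efficient-inference} at each exchangeable leaf, and then to sum the two contributions.

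First, I would recall the standard argument for efficient marginalization in conventional SPNs: because the network is complete and decomposable, the marginal of a partial assignment $\mathbf{e}$ can be computed by marginalizing $\mathbf{e}$ independently at each leaf whose scope intersects $\mathbf{e}$, followed by a single bottom-up pass evaluating every internal sum or product node once. This ``push-down'' reduction applies verbatim to XSPNs, since they retain completeness and decomposability and differ from standard SPNs only in the functional form of their leaves.

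Second, I would account for the cost per layer. Each internal node can be evaluated in time linear in its number of children, so the total internal work is $O(N)$. For a leaf representing a distribution over RVs that are partially exchangeable w.r.t.\ $T$, the marginal of the restriction $\mathbf{e}_{\text{leaf}}$ of the evidence to the leaf's scope can be written as $\sum_{t \in \mathcal{T}} w_t \, |S_{\mathbf{e}_{\text{leaf}},t}|$, where the $w_t$ are the leaf's parameters. By Theorem \ref{thm:efficient-inference}, both the existence check for $S_{\mathbf{e}_{\text{leaf}},t}$ and the cardinality $|S_{\mathbf{e}_{\text{leaf}},t}|$ can be obtained in time polynomial in $|\mathcal{T}|$ (and in the scope size, which is at most $N$). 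Summing over the at most $N$ leaves yields a bound polynomial in $N \cdot |\mathcal{T}|$, and adding the $O(N)$ internal work preserves this bound.

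The main obstacle is conceptual rather than computational: one needs to justify that the leaf-wise push-down argument, usually stated for univariate leaves, remains valid when leaves are multivariate exchangeable distributions. I would address this by observing that the argument only relies on decomposability of product nodes and completeness of sum nodes, both of which are imposed on XSPNs; the internal arithmetic is oblivious to the functional form of the leaves, so the reduction to per-leaf marginalization carries over unchanged.
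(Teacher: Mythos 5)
Your proposal is correct and follows essentially the same route as the paper's proof: reduce marginal inference to one evaluation of each internal node plus a marginal query at each of the at most $N$ leaves, and bound the per-leaf cost by the polynomial-in-$|\mathcal{T}|$ guarantee of Theorem \ref{thm:efficient-inference}. The paper states this in two sentences (citing the standard push-down argument), whereas you additionally spell out why decomposability and completeness make the push-down argument indifferent to the functional form of the leaves; this is a useful elaboration but not a different argument.
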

\begin{proof}
Marginal inference in XSPNs requires a constant number of evaluations of each inner node, and answering a marginal query at each leaf node \cite{peharz2015theoretical}. There are at most $N$ leaf nodes, and for each leaf node, inference is polynomial in $|\mathcal{T}|$ (Theorem \ref{thm:efficient-inference}).
\end{proof}

For example, time complexity of marginal inference in XSPNs where the leaf distributions use the statistic $T^\#$ with at most $n$ RVs is in $\mathcal{O}(n\,N)$. 
XSPNs unify conventional SPNs (with univariate leaf distributions) and MEVMs in a common framework: Conventional SPNs are a special case of XSPNs where all leaf distributions are univariate, and MEVMs are shallow XSPN with a single sum layer and a single product layer.

\subsection{Learning XSPNs}

\begin{algorithm}[t]
\caption{LearnXSPN($D$,$V$,$m$)}
\label{alg:learnxspn}
\begin{algorithmic}
\State \textbf{Input:} Set of instances $D$ over variables $V$, minimum number of instances $m$
\State \textbf{Output:} SPN representing a distribution over $V$
\If{$|D|<m$} 
\State \Return Exchangeable distribution estimated from $D$ 
\ElsIf{$|V|=1$} 
\State \Return Univariate distribution estimated from $D$
\ElsIf{$V$ are  exchangeable w.r.t. a given statistic $T$}
\State \Return Exchangeable distribution estimated from $D$
\ElsIf{$V$ can be partitioned into independent sets $V_j$}
\State \Return $\prod_j$ LearnXSPN($D$,$V_j$,$m$)
\Else 
\State Partition $D$ into subsets $D_i$
\State \Return $\sum_i$ $\frac{|D_i|}{|D|}$ LearnXSPN($D_i$,$V$,$m$)
\EndIf
\end{algorithmic}
\end{algorithm}

Next, we consider structure learning of XSPNs. The learning algorithm is based on LearnSPN (see Section \ref{sec:spns}) and is shown in Algorithm \ref{alg:learnxspn}. In addition to the usual LearnSPN scheme, the algorithm tests for the presence of partial exchangeability at each recursive call. When partial exchangeability is not rejected by the test, a leaf representing a distribution over partially exchangeable random variables is created directly, i.e.\ the algorithm does not recurse in that case.

Note that this approach for creating leaves is different from other approaches that allow for multivariate leaves, e.g. Chow-Liu trees \cite{vergari2015simplifying}: These other approaches introduce a multivariate leaf as a fallback case, when neither a product node nor a sum node can be created, whereas our algorithm explicitly tests whether the assumptions made in the multivariate leaf distribution are satisfied. 
Still, multivariate leaves can also be created as a fallback case in our algorithm. Here, it is natural to also use distributions over exchangeable RVs.

We want to emphasize that---in contrast to statistical relational models like Markov Logic Networks or Relational Sum-Product Networks \cite{nath2015learning}---XSPNs do not use any prior knowledge about exchangeability structure in the distribution, but detect exchangeability solely based on the data itself.

In the following, we discuss the additional operations that are required by the algorithm in more detail: First, we show how the parameters $w_1,\dots,w_{|\mathcal{T}|}$ of a partially exchangeable distributions can be estimated, and afterwards discuss statistical tests for partial exchangeability. 

\paragraph*{Parameter Estimation}
Let $X_1,\dots,X_n$ be a sequence of RVs that is partially exchangeable w.r.t. statistic $T$, and let  $\{\mathbf{x}^{(i)}\}_{i=1}^N$ be a set of samples. Recall that a distribution over such RVs can be represented by parameters $w_1,\dots,w_{|\mathcal{T}|}$, one parameter for each $t \in \mathcal{T}$. The Maximum Likelihood estimate of parameter $w_t$ is given by \cite{niepert2014exchangeable}\footnote{Note that we directly multiply by $|S_t|^{-1}$ when estimating the parameters, whereas \cite{niepert2014exchangeable} multiply by this factor when evaluating $P(\mathbf{x})$.}
\begin{equation*}
w_t = 1/N\, \sum_{i=1}^N [T(\mathbf{x}^{(i)}) = t]\, |S_t|^{-1}
\end{equation*}
Intuitively, the estimate makes use of the fact that each partially exchangeable distribution can be seen as a mixture of uniform distributions. Each equivalence class $S_t$ corresponds to a mixture component with weight $ 1/N\, \sum_{i=1}^n [T(\mathbf{x}^{(i)}) = t]$. The factor  $|S_t|^{-1}$ ensures that $w_t$ represents the probability of each of the assignments in $S_t$. 
Note that the complexity of computing $|S_t|$ is polynomial for tractable statistics $T$. For the counting statistic $T^{\#}$, this factor can even be computed in constant time, as $|S_t| = \binom{n}{t}$ in that case.

\paragraph*{Exchangeability Tests}
In the following, we discuss the problem of identifying whether the discrete RVs $X_1,\dots,X_n$ are partial exchangeable, given a set of samples $\{\mathbf{x}^{(i)}\}_{i=1}^N$. 
A simple option is given by \cite{niepert2014exchangeable}, who provide a number of necessary conditions for finite exchangeability. For example, when the RVs $X_1,\dots,X_n$ are exchangeable, then $\forall i,j: E[X_i] = E[X_j]$. 

We propose to use a more general approach: Given a set of samples $\{\mathbf{x}^{(i)}\}_{i=1}^N$, we use Pearson's $\chi^2$ test for goodness of fit to test the null hypothesis that the joint distribution $P(\mathbf{X})$ is partially exchangeable w.r.t. a given statistic $T$. 
Specifically, we perform the following steps: (i) estimate the parameters $w_1,\dots,w_{|\mathcal{T}|}$ w.r.t. the statistic $T$ from the samples $\{\mathbf{x}^{(i)}\}_{i=1}^N$, using the ML estimate above; (ii) for each assignment $\mathbf{x}$, compute its expected frequency (assuming that it is distributed according to $P(\mathbf{x}) = \sum_t [T(\mathbf{x}) = t]\, w_t$) and its empirical frequency, (iii) compute the test statistic of Pearson's $\chi^2$ test, and either reject or not reject the null hypothesis.

Unfortunately, the time complexity of this test grows exponentially with the number of RVs $n$, because the expected and empirical frequencies need to be computed for all assignments $\mathbf{x}$. 
Therefore, we propose to approximate the test that works by performing pairwise comparisons.
Specifically, for a sequence $X_1,\dots,X_n$, we test all pairs $(X_i,X_j)$ with $i,j \in \{1,\dots,n\}$, $i < j$ for partial exchangeability, using the test outlined above. When none of these tests rejects the null hypothesis, we assume that $X_1,\dots,X_n$ are partially exchangeable. 

Pairwise exchangeability is a necessary condition of (full) exchangeability \cite[Proposition~1.12]{schervish2012theory}, thus all cases of actual full exchangeability are also identified by the pairwise approach. We rarely encounter false positives in practice, as illustrated empirically in the supplementary material.

The pairwise test needs to perform $\mathcal{O}(n^2)$ $\chi^2$ tests, and for each test, compute a Maximum Likelihood parameter estimate. For the statistic $T^{\#}$, parameter estimation can be done in $\mathcal{O}(N)$ time (where $N$ is the number of samples), and thus the time complexity of all pairwise tests is in  $\mathcal{O}(N\, n^2)$. In practice, this test only has a small effect on overall runtime of LearnXSPN, which is dominated by clustering.

\section{Experimental Evaluation}

We evaluated XSPNs on three types of datasets: Seven synthetic datasets that were constructed to contain (contextually) exchangeable RVs, four real-world datasets consisting of multiple interchangeable parts (where partial exchangeability arises naturally), and 20 commonly used benchmark datasets.
We investigated probability estimation as well as classification performance of XSPNs, compared to conventional SPNs and additional baseline models.

\subsection{Probability Estimation}
\label{subsec:probability-estimation}

\paragraph*{Data}
Four of the synthetic datasets were created by following \cite{niepert2014exchangeable}: We sampled uniformly from $\{0,1\}^{100}$, and then only kept samples that satisfy certain constraints on the number of ones in the sample.
Two datasets (MEVM-s and MEVM-l) were sampled from MEVM models. The conference dataset was sampled from a Markov Logic Network that describes decision-making of people attending or not attending a conference, as introduced in Example \ref{expl:intro-new}.

Additionally, we evaluated our approach on four real-world density estimation tasks. 
The \emph{exams} dataset consists of exam participation information of 487 business students that started their studies in fall term 2019 at the University of Mannheim. 
58 courses were attended by at least 10 of these students between fall 2019 and fall 2021.
Each binary RV represents participation in one of these courses, and each example represents a student.
The \emph{senate} dataset contains all 720 roll call votes in the Senate of the 116th United States Congress, taken from  \cite{lewis2021voteview}.
We only kept votes of the 98 senators that participated in the majority of the votes. Each binary RV represents the votes of a senator, and each example represents a ballot. 
A similar procedure was applied to the votes from the  House of Representatives of the 116th US Congress (\emph{house} dataset) and the 17th German federal parliament (\emph{bundestag} dataset, data taken from \cite{bergmann2018btvote}).

Finally, for completeness, we also evaluated SPNs and XSPNs on 20 benchmark datasets that are commonly used for comparing SPN learning algorithms \cite{gens2013learning}. 
For a more detailed description of the datasets, we refer to the supplementary material.

\paragraph*{Experiments}

We compared the following models: SPNs with Chow-Liu trees as leaf distributions trained via LearnSPN \cite{vergari2015simplifying} (which are closest to XSPNs as they also use multivariate leaf distributions); our XSPNs trained via LearnXSPN; MEVMs (as shallow variants of XSPNs) trained via EM \cite{niepert2014exchangeable}; and \emph{Masked Autoencoders for Distribution Estimation} (MADEs) \cite{germain2015made}. MADEs do not provide tractable marginal inference and are thus not directly comparable to (X)SPNs, but are used as strong baseline density estimators here.
We do not consider methods for modeling distributions over sets \cite{yang2020energy}, because they assume full exchangeability and are thus not suited for the cases investigated here. 
Our implementation\footnote{Available at \url{https://github.com/stefanluedtke/XSPNFlow}} of XSPNs is based on the SPFlow library \cite{Molina2019SPFlow} for Python.

In all SPN models, we used the g-test as independence test and clustered instances via EM for Gaussian Mixture Models.
As proposed by \cite{vergari2015simplifying}, we limited the number of child nodes of each sum and product node to 2. For XSPN leaves, we used the statistic $T^{\#}$ introduced in Section \ref{subsec:exchangeability}.
We performed an exhaustive grid search of the SPN hyperparameters.  
Specifically, we varied the g-test threshold values $\rho\in\{5,15\}$, the minimum number of instances $m \in \{20,200\}$, and the significance level of the $\chi^2$-test for exchangeability $p \in \{0.05,0.1,0.2,0.4\}$. For all models, we used Laplace smoothing with $\alpha = 0.1$. Chow-Liu tree leaf learning failed in some cases using SPFlow, in which case we used fully factorized leaf distributions. 
Parameter settings for the MEVM and MADE models can be found in the supplementary material.

\paragraph*{Results}

\begin{figure}[tb]
\centering
\includegraphics[scale=0.4]{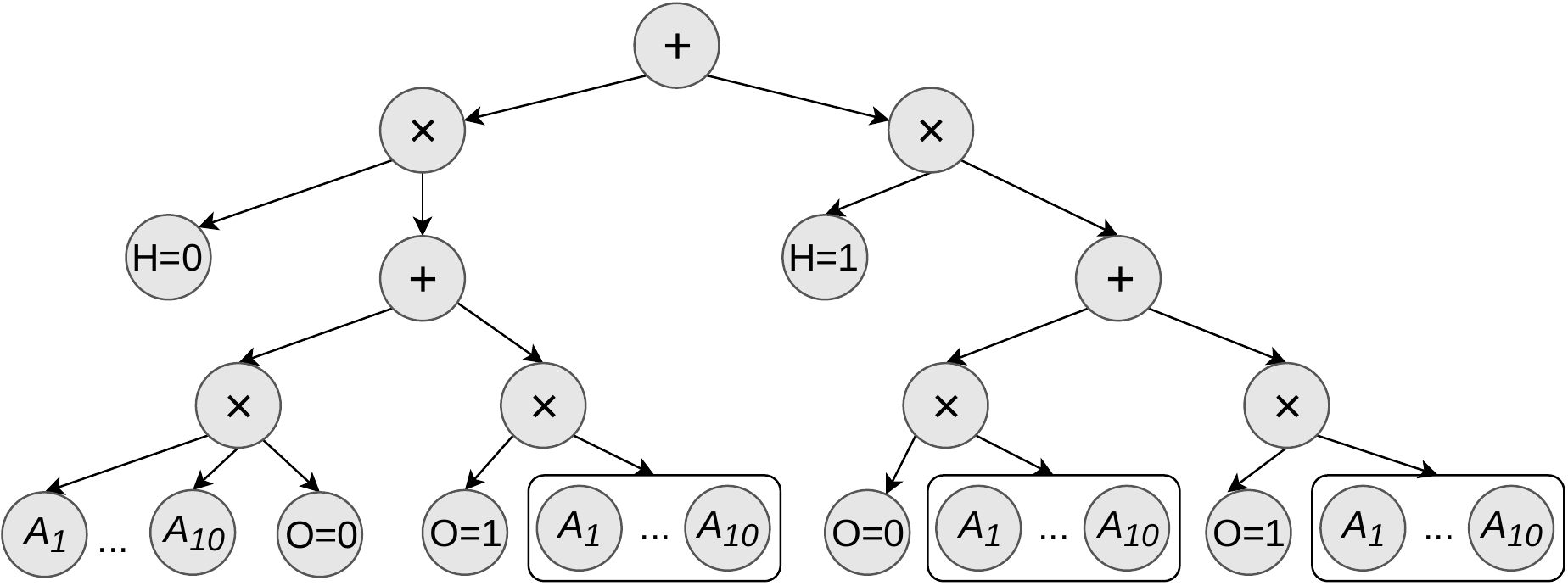}
\caption{XSPN learned for the conference dataset. The XSPN faithfully represents the true distribution underlying the MLN specification: For fixed HotTopic and Overflow, all of the Attends random variables are exchangeable---except for the case where  HotTopic = Overflow = 0, in which case all of the Attends variables are  independent.}
\label{fig:conference-xspn}
\end{figure}

\begin{table}[tb]
\centering
\begin{small}
\begin{tabular}{p{0.18cm}lrrrr}
  \toprule
& dataset & SPN & XSPN & MEVM &  MADE \\ 
  \midrule
\multirow{ 7}{*}{\rotatebox{90}{synthetic}} &  threshold & -68.035 & \textbf{-67.318} & -67.408 & -67.628 \\ 
 & exact & -69.327 & \textbf{-67.706} & -67.906 & -69.474 \\ 
 & parity & -69.327 & \textbf{-68.623} & -69.368 & -69.505 \\ 
 & counting & -69.323 & \textbf{-69.319} & -67.936 & -69.468 \\ 
 & MEVM-s & -12.091 & \textbf{-11.889} & -12.524 & -11.885 \\ 
 & MEVM-l & 87.795 & \textbf{-84.858} & -94.309 & -81.835 \\ 
 & conference & -5.936 & \textbf{-5.907} & -5.91 & -5.541 \\ 
  \midrule
\multirow{ 4}{*}{\rotatebox{90}{real}} & senate & -20.778 & \textbf{-19.663} & -21.107 & -20.972 \\ 
 & house & -86.517 & \textbf{-80.032} & -83.472 & -83.277 \\ 
 & bundestag & -228.459 & \textbf{-103.763} & -106.42 & -161.727 \\
 & exam & -6.952 & -6.72 & -7.066 & -12.308 \\ 
  \midrule
\multirow{ 20}{*}{\rotatebox{90}{benchmark}} & nltcs & -6.059 & -6.06 & -6.04$^*$ & -6.04 \\ 
 & plants & -13.035 & \textbf{-13.009} & -14.86$^*$ & -12.32 \\ 
 & webkb & -156.444 & -156.65 & -157.21$^*$ & -149.59 \\ 
 & msnbc & -6.043 & -6.043 & -6.23$^*$ & -6.06 \\ 
 & kdd & -2.151 & -2.15 & -2.13$^*$ & -2.07 \\ 
 & audio & -40.823 & -40.82 & -40.63$^*$ & -38.95 \\ 
 & jester & -53.811 & -53.806 & -53.22$^*$ & -52.23 \\ 
 & netflix &  -58.213 & -58.208 & -57.84$^*$ & -55.16 \\ 
 & msweb & -9.882 & -9.884 & -9.96$^*$ & -9.59 \\ 
 & book & -35.079 & \textbf{-34.937} & -34.63$^*$ & -33.95 \\ 
 & accidents & -28.957 & \textbf{-28.937} & -38.258 & -26.42 \\ 
 & dna & -81.53 & -81.567 & -98.34 & -82.77 \\ 
 & kosarek & -10.77 & \textbf{-10.741} & -10.997 & - \\ 
 & pumsb & -23.457 & \textbf{-23.375} & -36.396 & -22.3 \\ 
 & retail & -11.019 & \textbf{-10.967} & -10.897 & -10.81 \\ 
 & movie & -52.373 & \textbf{-51.702} & -52.015 & -48.7 \\ 
 & bbc & -252.994 & -251.949 & -252.023 & -242.4 \\ 
 & ad & -18.996 & \textbf{-15.747} & -32.359 & -13.65 \\ 
 & 20ng & -153.503 & -153.545 & -152.69$^*$ & -153.18 \\ 
 & reut.-52 & -84.777 & \textbf{-84.651} & -86.98$^*$ & -82.8 \\
   \bottomrule
\end{tabular}
\end{small}
\caption{Test log likelihoods for all models and datasets. For the SPN and XSPN, results that are significantly better than the other (X)SPN model are printed in bold (paired t-test, $p<0.05$). $^*$ Results taken from  \protect\cite{niepert2014exchangeable}.  }
\label{tbl:results}
\end{table}

Table \ref{tbl:results} shows the experimental results.
For the 7 synthetic datasets, the XSPNs outperform all other baseline models (except for being on-par with MADE for some datasets). For these datasets, XSPNs can often learn the \emph{true} distribution. 
As an example, consider Figure \ref{fig:conference-xspn}, which shows the XSPN learned for the conference dataset. This XSPN represents the true dependency structure in the dataset. Note that LearnXSPN was able to discover this structure completely bottom-up from the data
without any prior knowledge about the domain, as usually required for top-down relational learning approaches.
For these datasets, XSPNs also need substantially fewer parameters: For example, the XSPN shown in Figure \ref{fig:conference-xspn} has 52 parameters, while the corresponding SPN (with identical hyperparameters) has 363 parameters.

For the real-world datasets (exam, senate, house, bundestag), XSPNs also generally outperform the baseline models.
Notably, the difference between performance of XSPN and the other models is specifically pronounced for bundestag data. We suspect that the inductive bias of XSPNs towards exchangeability is able to prevent overfitting for this dataset, which contains only few samples in relation to the number of variables.
Overall, XSPNs achieve state-of-the-art results for datasets consisting of multiple, interchangeable parts.

The 20 benchmark datasets do not explicitly encode exchangeability, thus we did not expect a large benefit of XSPNs compared to conventional SPNs with Chow-Liu tree leaves. 
Interestingly, XSPNs achieve significantly (although only slightly) higher test log likelihoods than SPNs for 8 of the 20 datasets. 
Thus, XSPNs can also be used in such general domains without loss of performance, compared to SPNs.

\subsection{Classification}

\begin{table}[tb]
\centering
\begin{small}
\begin{tabular}{lrrrrr}
  \toprule
dataset & SPN & XSPN & MEVM & SVM & XGBoost  \\ 
  \midrule
counting & 0.758 & 1.000 & 0.793 & 0.796 & 0.793  \\ 
  exact & 0.975 & 1.000 & 0.977 & 0.979 & 0.978 \\ 
  parity & 0.493 & 1.000 & 0.507 & 0.605 & 0.502  \\ 
  threshold & 0.983 & 1.000 & 0.987 & 0.982 & 0.983 \\ 
  conference & 0.854 & 0.854 & 0.851 & 0.850 & 0.852  \\ 
  \midrule
  exam & 0.986 & 0.986 & 0.971 & 0.997 & 0.971  \\
  senate & 0.983 & 0.992 & 0.958 & 0.974 & 0.967 \\ 
  house & 0.928 & 0.967 & 0.908 & 0.986 & 0.954  \\ 
  bundestag & 0.951 & 0.951 & 0.610 & 0.941 & 0.951  \\ 
   \bottomrule
\end{tabular}
\end{small}
\caption{Test accuracies for the classification experiments. }
\label{tbl:classification-results}
\end{table}

Additionally, we evaluated the classification performance of XSPNs. This was done by training a separate SPN or XSPN for each class $y$ to represent $P(\mathbf{x} \given y)$, and then computing the class posterior via $P(y \given \mathbf{x}) \propto P(\mathbf{x} \given y)\, P(y)$. %For MEVMs, the class variable indicated the mixture component.
As baselines, we used Support Vector Machines (SVMs) and gradient boosted trees (XGBoost). 
The classification tasks are based on the datasets described in Section \ref{subsec:probability-estimation}, where one of the RVs is selected as classification target. For example, for the voting data, the task is to predict the votes of one of the representatives, given the votes of all other representatives. 
More details on the classification tasks can be found in the supplementary material.

The test accuracies are shown in Table \ref{tbl:classification-results}. For the first four synthetic datasets, the XSPN can learn the true distribution of the data (conditional on the class, all RVs are fully exchangeable), and thus classifies all test samples correctly.
The conventional SPN and the baseline classifiers, on the other hand, cannot learn the underlying structure of the data appropriately; their accuracies lie in the range of the prior probability of the majority class. 
For the conference and exam data, SPN and XSPN achieve the same performance, being competitive with the baseline classifiers. 

In the vote prediction tasks, the XSPN achieves higher accuracies than the SPN for two datasets (senate, house) and the same accuracy for the bundestag dataset. Furthermore, even though (X)SPNs are not primarily designed for classification tasks, the accuracy of the XSPN model is competitive with the baseline classifiers.

\section{Discussion and Conclusion}

We proposed an extension of SPNs which allows to efficiently handle distributions over partially exchangeable RVs, as well as a structure learning algorithm for these models.
The learning algorithm constructs a deep model with efficient representations of exchangeable sequences, without needing any prior knowledge about exchangeability relations among the RVs.
XSPNs achieve state-of-the-art density estimation performance for datasets containing repeated, interchangeable parts, and are competitive with SPNs in general domains.

Our learning algorithm is based on the structure learning algorithm of \cite{vergari2015simplifying}. Future research will focus on integrating means to detect and represent exchangeability in other SPN learning algorithms, e.g.\ Bayesian learning  \cite{trapp2019bayesian}. 
 A further interesting direction for future work is to generalize our approach to other statistics (e.g.\ related to exchangeable decompositions \cite{niepert2014tractability}) and continuous and hybrid domains.

\subsubsection*{Acknowledgements}
We would like to thank Lea Cohausz for providing the \emph{exam} dataset. This research was partially funded by the German Federal Ministry for Economic Affairs and Climate Action (BMWK).

\bibliographystyle{named}
\bibliography{ijcai22}

\appendix

\section{Details of Experimental Evaluation}
\label{sec:appendix-data}

 In the following, we describe the evaluation datasets in more detail.

\paragraph*{Probability Estimation Datasets}
The first four synthetic datasets were created by following Niepert and Domingos \cite{niepert2014exchangeable}: Each dataset was created by sampling uniformly from $\{0,1\}^{100}$ and rejecting all samples that did not satisfy a dataset-specific constraint. Let $n(\mathbf{x})$ be the number of ones in a sample. 
For the exact data, samples with $n(\mathbf{x}) \text{mod} 5 = 0$ were selected; for the threshold data samples with $n(\mathbf{x}) < 45$ were selected; for the parity data samples with $\mathbf{x} \text{ mod } 2 = 0$ were selected; and for the counting data samples with $n(\mathbf{x}) \text{ mod } 5 = 3$ were selected. 

The MEVM-small data was sampled from a MEVM with a single mixture component and 4 independent factors, each having 5 fully exchangeable RVs. The MEVM-large data was generated similarly, but sampled from a MEVM with 10 mixture components, 10 factors per component and 15 RVs per factor. The assignment of RVs to factors, as well as the parameters of the factors were chosen randomly for each mixture component.

The conference dataset was created by sampling from a Markov Logic Network with the following formulas:
\begin{verbatim}
1 HotTopic(c) => Attends(p,c)
0.2 Attends(px) ^ Attends(py)
1 Attends(p,c) => Overflow(c)
\end{verbatim}
It models researchers attending a conference. The probability of each person attending the conference depends on whether the conference is considered a \emph{hot topic} and on the number of other people attending. The more people attend, the more likely there is an overflow of the conference. 

The \emph{exams} dataset consists of information on students' exam attendance. Specifically, the dataset contains data of all 487 business students that started their studies in fall term 2019 at the University of Mannheim. 
 58 courses (specifically, the corresponding exams) were attended by  at least 10 of these students between fall 2019 and fall 2021.
Each binary RV represents one of these 58 exams, each example represents a student. Each value of a RV denotes whether the student participated in that exam. 

Finally, we used three real-world datasets describing voting behavior of representatives in parliaments.
The senate dataset contains all 720 roll call votes in the Senate of the 116th United States Congress, taken from  \cite{lewis2021voteview}.
We only kept votes of the 98 senators that participated in the majority of the votes. Each binary RV represents the votes of a senator (yea or nay/abstention), and each example represents a vote. 
Similarly, the House dataset (also taken from \cite{lewis2021voteview}) contains 952 roll calls of 418 representatives in the House of Representatives of the 116th United States Congress. The Bundestag data, taken from \cite{bergmann2018btvote}, contains 252 roll calls of 542 representatives in the 17th the German federal parliament (the \emph{Bundestag}).

Exchangeability arises naturally in these datasets: Students need to take a fixed number of courses from a subset, but can choose the specific courses freely. Similarly, voting behavior of representatives of the same party or fraction can be highly correlated, but each individual's (marginal) probability of a positive vote might be identical. Thus, we assume that XSPNs contain a useful inductive bias for such datasets.
For each dataset, we predefined the size of the training, validation and test sets, and then assigned the samples to these sets randomly. 
The properties of all datasets are shown in Table \ref{tbl:data-properties}.

\begin{table}[tb]
\centering
\begin{tabular}{llrrrr}
  \toprule
& dataset & $|V|$ & train & valid. & test \\ 
  \midrule
\multirow{ 7}{*}{\rotatebox{90}{synthetic}}  & threshold & 100 & 10000 & 5000 & 5000 \\ 
 & exact & 100 & 10000 & 5000 & 5000 \\ 
 & parity & 100 & 10000 & 5000 & 5000 \\ 
 & counting & 100 & 10000 & 5000 & 5000 \\ 
 & MEVM-small &  20 & 5000 & 2500 & 2500 \\ 
 & MEVM-large & 150 & 10000 & 5000 & 5000 \\ 
 & conference &  12 & 50000 & 50000 & 50000 \\ 
  \midrule
\multirow{ 4}{*}{\rotatebox{90}{real}}  & exam & 58 & 350 & 67 & 70 \\
 & senate & 98 & 500 & 100 & 120 \\
 & house & 418 & 700 & 100 & 152 \\
 & bundestag & 542 & 170 & 41 & 41\\ 
   \bottomrule
\end{tabular}
\caption{Properties of the new evaluation datasets.}
\label{tbl:data-properties}
\end{table}

\paragraph*{Classification Datasets and Experiments}
We experimented with versions of the threshold, exact, parity and counting data, which were created by sampling uniformly from $\{0,1\}^{100}$, and assigning each sample to class 1 when a dataset-specific constraint (as described above) is met, and to class 0 otherwise.
For the conference dataset, we used the \emph{overflow} variable as classification target.
For the exam data, participation in an (arbitrarily chosen) exam was used as classification target, and all other exam participation data was used as input features. 
 For each of the voting datasets (senate, house, bundestag), the votes of one (arbitrarily chosen) representative were used as classification target, and the votes of all other representatives were used as input features.
Each dataset was split into disjoint training, validation and test datasets.

As baselines, we used Support Vector Machines (SVMs) and gradient boosted trees (XGBoost). 
We used the SVM implementation of the R package \texttt{e1071} (using the default values of all parameters). Gradient boosted trees were taken from the R package \texttt{xgboost}. We  set the maximum number of boosting iterations to 2 and used the default values for all other parameters. 
MEVMs were trained by using the class variable to indicate the mixture component, i.e., for binary classification tasks, the MEVM consists of two mixture components.

\section{Ablation Study}

\begin{table*}[tb]
\centering
\begin{tabular}{llllll|ll}
  \toprule
& dataset & SPN & SPN-CLT & XSPN-T & XSPN-TF & MADE & MEVM \\ 
  \midrule 
\multirow{ 7}{*}{\rotatebox{90}{synthetic}}  &  threshold & -68.035 & -68.245 & \textbf{-67.318} & \textbf{-67.318} & -67.628 & -67.41 \\ 
 &  exact & -69.327 & -69.542 & \textbf{-67.706} & \textbf{-67.706} & -69.474 & -67.91 \\ 
  & parity & -69.327 & -69.327 & \textbf{-68.623} & \textbf{-68.623} & -69.505 & -69.37 \\ 
  & counting & -69.323 & -69.323 & \textbf{-69.319} & \textbf{-69.319} & -69.468 & -67.94 \\ 
  & MEVM-s & -12.091 & -12.091 & \textbf{-11.889} & \textbf{-11.889} & -11.885 & -12.52 \\ 
  & MEVM-l & -87.795 & -88.135 & \textbf{-84.858} & \textbf{-84.858} & -81.835 & -94.31 \\
  & conference & -5.936 & -5.936 & \textbf{-5.907} & \textbf{-5.907} & -5.541 & -5.91 \\ 
  \midrule
 \multirow{4}{*}{\rotatebox{90}{real}}  & senate & -20.778 & -20.832 & -20.804 & \textbf{-19.663} & -20.972 & -21.11 \\ 
  & house & -86.517 & - & -86.464 & \textbf{-80.032} & -83.277 & -83.47 \\ 
  & bundestag & -228.459 & - & -226.898 & \textbf{-103.763} & -161.727 & -106.42 \\ 
  & exam & -6.952 & - & -6.853 & -6.72 & -12.308 & -7.07 \\ 
  \midrule
 \multirow{ 20}{*}{\rotatebox{90}{benchmark}}  & nltcs & -6.059 & -6.061 & -6.06 & -6.06 & -6.04 & -6.04 \\ 
  & plants & -13.035 & -13.099 & -13.034 & \textbf{-13.009} & -12.32 & -14.86 \\ 
  & webkb & -156.444 & - & -156.658 & -156.65 & -149.59 & -157.21 \\ 
  & msnbc & -6.043 & -6.043 & -6.043 & -6.043 & -6.06 & -6.23 \\ 
  & kdd & -2.151 & -2.153 & -2.151 & -2.15 & -2.07 & -2.13 \\ 
  & audio & -40.823 & -40.856 & -40.824 & -40.82 & -38.95 & -40.63 \\ 
  & jester & -53.811 & -53.908 & -53.806 & -53.866 & -52.23 & -53.22 \\ 
  & netflix & -58.213 & -58.281 & -58.211 & -58.208 & -55.16 & -57.84 \\ 
  & msweb & -9.882 & - & -9.888 & -9.884 & -9.59 & -9.96 \\ 
  & book & -35.079 & - & -35.054 & \textbf{-34.937} & -33.95 & -34.63 \\ 
  & accidents & -28.957 & - & -28.952 & \textbf{-28.937} & -26.42 & -38.26 \\ 
  & dna & -81.53 & -88.891 & -81.533 & -81.567 & -82.77 & -98.34 \\ 
  & kosarek & -10.779 & -10.77 & \textbf{-10.741} & \textbf{-10.741} & - & -11.00 \\ 
  & pumsb & -23.457 & -23.81 & -23.446 & \textbf{-23.375} & -22.3 & -36.40 \\ 
  & retail & -11.019 & -11.022 & \textbf{-10.967} & \textbf{-10.967} & -10.81 & -10.90 \\ 
  & movie & -52.373 & - & -52.367 & \textbf{-51.702} & -48.7 & -52.02 \\ 
  & bbc & -252.994 & - & -253 & -251.949 & -242.4 & -252.02 \\ 
  & ad & -18.996 & - & -19.201 & \textbf{-15.747} & -13.65 & -32.36 \\ 
  & 20ng & -153.503 & - & \textbf{-153.486} & -153.545 & -153.18 & -152.69 \\ 
  & reut.-52 & -84.777 & - & \textbf{-84.687} & \textbf{-84.651} & -82.8 & -86.98 \\  
   \bottomrule
\end{tabular}
\caption{Mean test log likelihoods (LLs) for evaluation datasets. LLs of XSPN-T and XSPN-TF that are significantly better than LLs of SPN and SPN-CLT are printed in bold (paired t-test, $p<0.05$).}
\label{table:results-ll}
\end{table*}

\paragraph*{Experiments}
To investigate the causes for performance differences between XSPNs and conventional SPNs in more detail, we performed an ablation study.
 XSPNs (as described in the main paper) introduce exchangeable leaf distributions in two cases: (i) When the $\chi^2$ test indicates that the RVs are exchangeable, and (ii) as a fallback case when only few instances are left, similar to Chow-Liu tree leaves used in \cite{vergari2015simplifying}. Goal of this experiment was to investigate the effect of these two cases on model performance. Specifically, we evaluated the following models:

\begin{itemize}
\item SPN: A conventional SPN model trained by LearnSPN with univariate leaves. As a fallback case when the number of instances decreases below a threshold $m$, a fully factorized distribution is used.
\item SPN-CLT: A conventional SPN with Chow-Liu trees as fallback case when the number of instances falls below a threshold $m$.
\item XPN-T: An XSPN trained by LearnXSPN. Multivariate, exchangeable leaf distributions (using statistic $T^\#$) are only created when indicated by the $\chi^2$ test. As the fallback case, a fully factorized distribution is used.
\item XSPN-TF: As XSPN-T, but in the fallback case, an exchangeable leaf distribution is used.
\item MEVM: Mixtures of Exchangeable Variable Models (MEVMs) were trained by Expectation Maximization, using a fixed number of 20 mixture components, similar to \cite{niepert2014exchangeable}.  We only evaluated MEVMs for the datasets not used in \cite{niepert2014exchangeable}, and took log likelihood values for the other datasets from their results. 
\item MADE: Similar to \cite{peharz2020random}, we report results for \emph{Masked Autoencoders for Distribution Estimation} (MADE) models \cite{germain2015made} as a strong baseline. We used a single hidden layer with 500 nodes and 8 variable orderings. Note that MADEs do not provide tractable marginal inference.
\end{itemize}

In all SPN models, we used the g-test as independence test and clustered instances via EM for Gaussian Mixture Models.
As proposed by \cite{vergari2015simplifying}, we limited the number of child nodes of each sum and product node to 2 in all models. 
For all SPN models, we performed an exhaustive grid search of hyperparameters.
Specifically, we varied the g-test threshold values $\rho\in\{5,15\}$, the minimum number of instances $m \in \{20,200\}$, and the significance level of the $\chi^2$-test for exchangeability $p \in \{0.05,0.1,0.2,0.4\}$. For all models, we used Laplace smoothing with $\alpha = 0.1$. 

For each algorithm and dataset, we selected the parameter configuration with the highest mean log likelihood on validation data, and reported mean test log likelihood and number of parameters for that configuration. We used paired t-tests to assess significant differences between log likelihoods of XSPNs and conventional SPNs, and set statistical significance at $p<0.05$. 
 
\begin{figure}[t]
\centering
\includegraphics[scale=0.55]{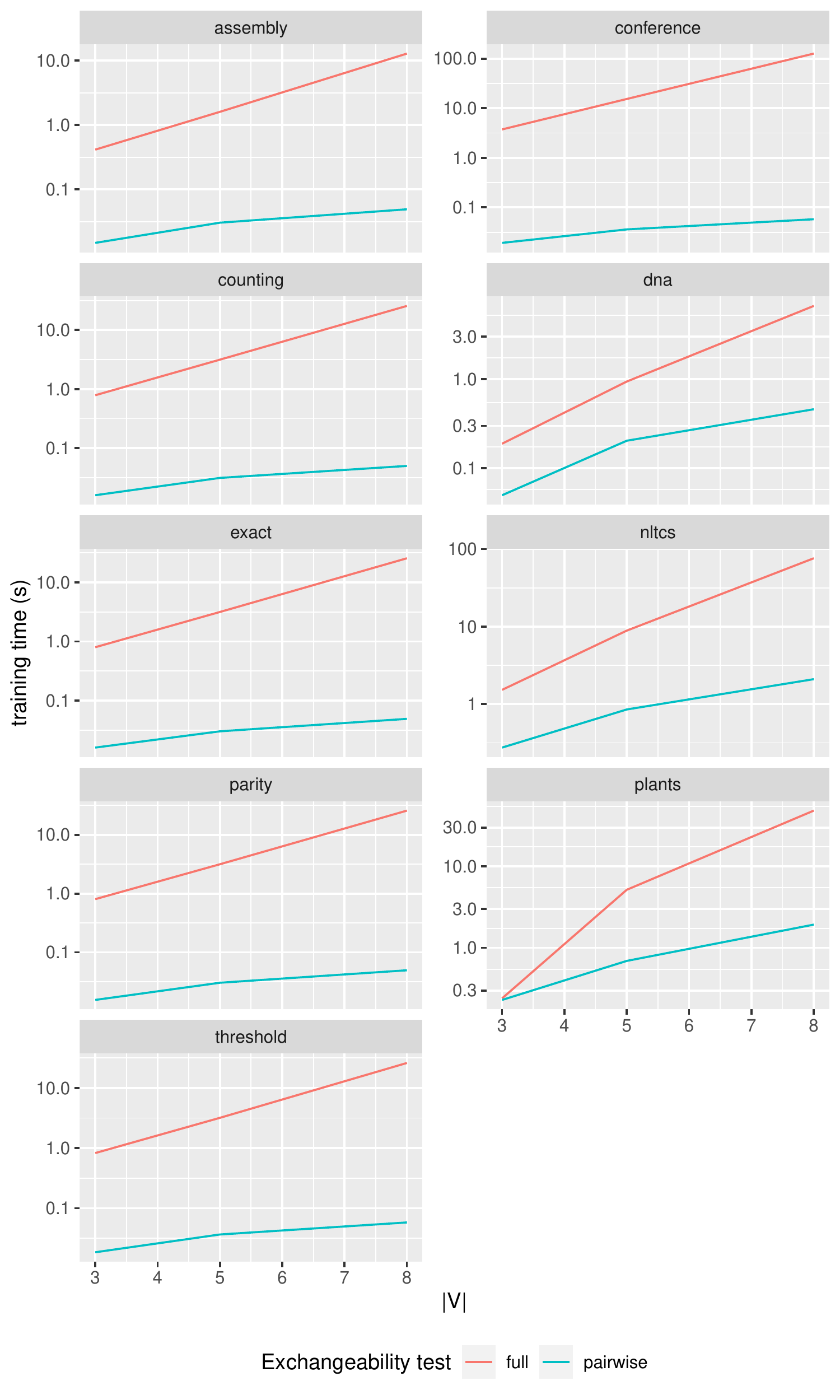}
\caption{Training time of XSPNs using the full exchangeability test, and using the pairwise approximation, w.r.t.\ the number of variables of the dataset that were used.}
\label{fig:runtime}
\end{figure}

\paragraph*{Results}
Table \ref{table:results-ll} shows the test log likelihood of all models.
By comparing SPN and XSPN-T, we can assess the effect of the leaves induced by the $\chi^2$-test on model performance (as the learning algorithm is otherwise identical). 
For the synthetic datasets, XSPN-T achieves substantially better log likelihoods than standard SPNs. 
The difference is specifically pronounced for data with full exchangeability (threshold, exact, parity, counting).
This is the case because XSPN-T can recover the true distribution of these datasets.
For the real-world datasets  as well as the 20 benchmark datasets, XSPN-T is not significantly better than SPN.

However, when additionally allowing exchangeable leaves as fallback cases (XSPN-TF), the log likelihood can be further increased for the real-world datasets. 
XSPN-TF consistently outperforms SPN and SPN-CLT (which is a fairer comparison, as they both make use of multivariate leaves) for the synthetic and real-world datasets.
Notably, the difference is specifically pronounced for bundestag data. We suspect that the inductive bias of XSPNs towards exchangeability prevents overfitting for this dataset, which contains only few samples in relation to the number of variables. 
Interestingly, XSPN-TF even outperforms the other baseline methods (MEVM, MADE) on the real-world datasets.

\section{Effect of Pairwise Exchangeability Tests}

Finally, we investigated whether the pairwise approximation of the $\chi^2$ test for exchangeability is necessary, or whether the full test could also be used. 
Specifically, this was done by evaluating two versions of LearnXSPN, either using the full $\chi^2$ test, or the pairwise approximation, on a subset of the benchmark datasets. For each of the benchmark datasets, we only used the first 3, 5 or 8 variables (i.e.\ columns), and recorded test log likelihood and runtime. 

Runtimes of the two versions of LearnXSPN are shown in Figure \ref{fig:runtime}. 
Quantitatively, the runtime behaves similar for all datasets: When using the pairwise approximation, overall runtime increases only moderately (complexity of the pairwise test is quadratic in the number of variables). When using the full test, runtime increases dramatically, so that datasets with more than 8 variables are infeasible (complexity of the full test is exponential in the the number of variables). 

Interestingly, the XSPNs resulting from the different versions of LearnXSPN are identical in all cases. That is, whenever the full test identifies exchangeability, this exchangeability is also identified by the pairwise test, and there was no case where the pairwise test identified additional exchangeability that was not identified by the full test. 
Thus, in practice, the pairwise approximation can identify he same instances of exchangeability as the full test (for the datasets we considered), while being computationally much more efficient.

\end{document}